\definecolor{cvprblue}{rgb}{0.21,0.49,0.74}
\DeclareMathOperator*{\argmin}{argmin}
\theoremstyle{plain}
\newtheorem{theorem}{Theorem}[section]
\newtheorem{proposition}[theorem]{Proposition}
\theoremstyle{definition}
\theoremstyle{remark}
\title{LotusFilter: Fast Diverse Nearest Neighbor Search via a Learned Cutoff Table}
\author{Yusuke Matsui\\
The University of Tokyo\\
{\tt\small matsui@hal.t.u-tokyo.ac.jp}
}
\begin{document}

\maketitle

\begin{abstract}
Approximate nearest neighbor search (ANNS) is an essential building block for applications like RAG but can sometimes yield results that are overly similar to each other. In certain scenarios, search results should be similar to the query and yet diverse. We propose LotusFilter, a post-processing module to diversify ANNS results. We precompute a cutoff table summarizing vectors that are close to each other. During the filtering, LotusFilter greedily looks up the table to delete redundant vectors from the candidates. We demonstrated that the LotusFilter operates fast (0.02 [ms/query]) in settings resembling real-world RAG applications, utilizing features such as OpenAI embeddings. Our code is publicly available at \url{https://github.com/matsui528/lotf}.
\end{abstract}

\section{Introduction}

An approximate nearest neighbor search (ANNS) algorithm, which finds the closest vector to a query from database vectors~\cite{book_bruch2024,tutorial_matsui2020,tutorial_matsui2023}, is a crucial building block for various applications, including image retrieval and information recommendation. Recently, ANNS has become an essential component of Retrieval Augmented Generation (RAG) approaches, which integrate external information into Large Language Models~\cite{tutorial_asai2023}.

\begin{figure}[t]
  \centering
  \begin{minipage}[b]{0.48\hsize}
    \centering
    \includegraphics[width=1.0\linewidth]{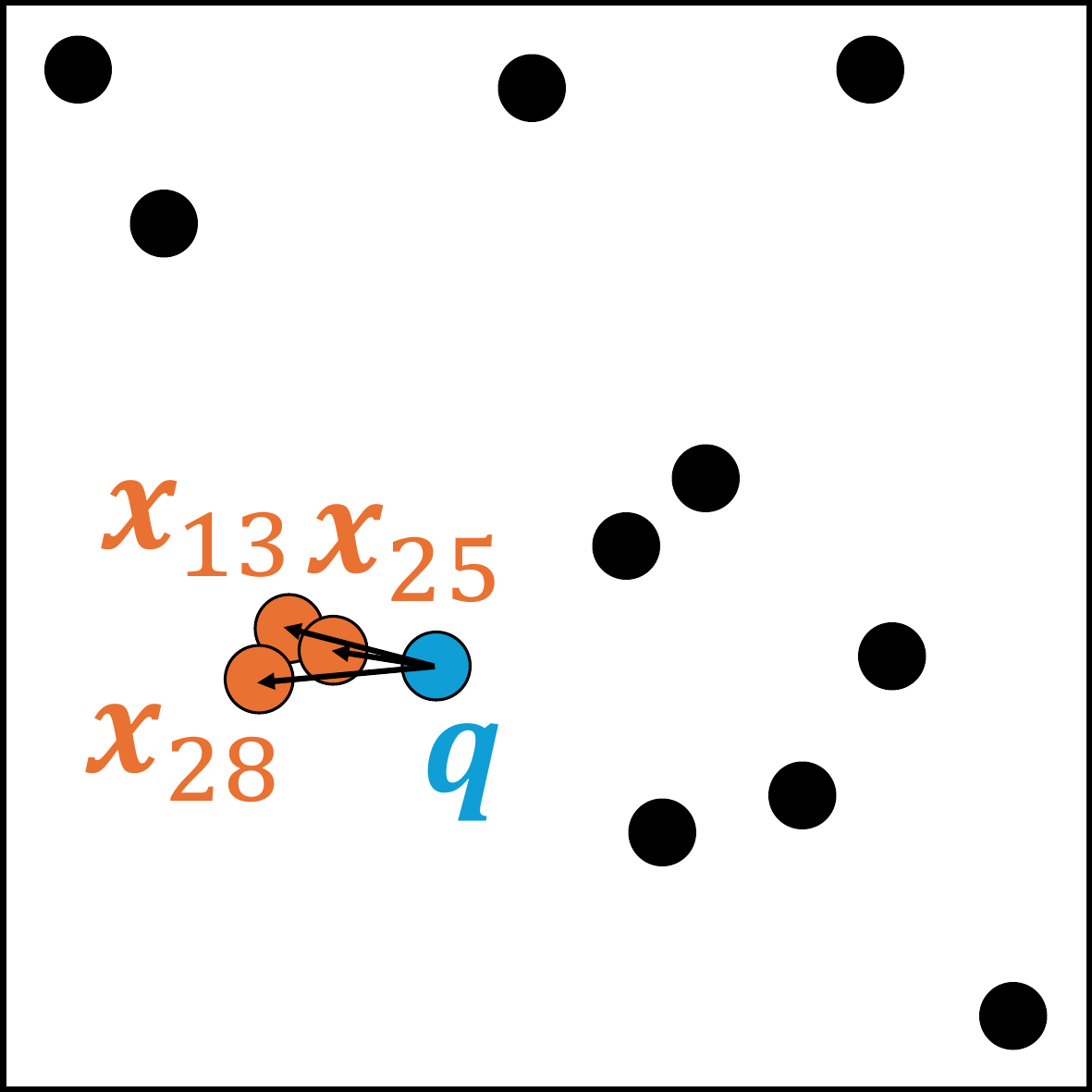}
    \subcaption{Usual ANNS}\label{fig:teaser1}
  \end{minipage}
  \hfill
  \begin{minipage}[b]{0.48\hsize}
    \centering
    \includegraphics[width=1.0\linewidth]{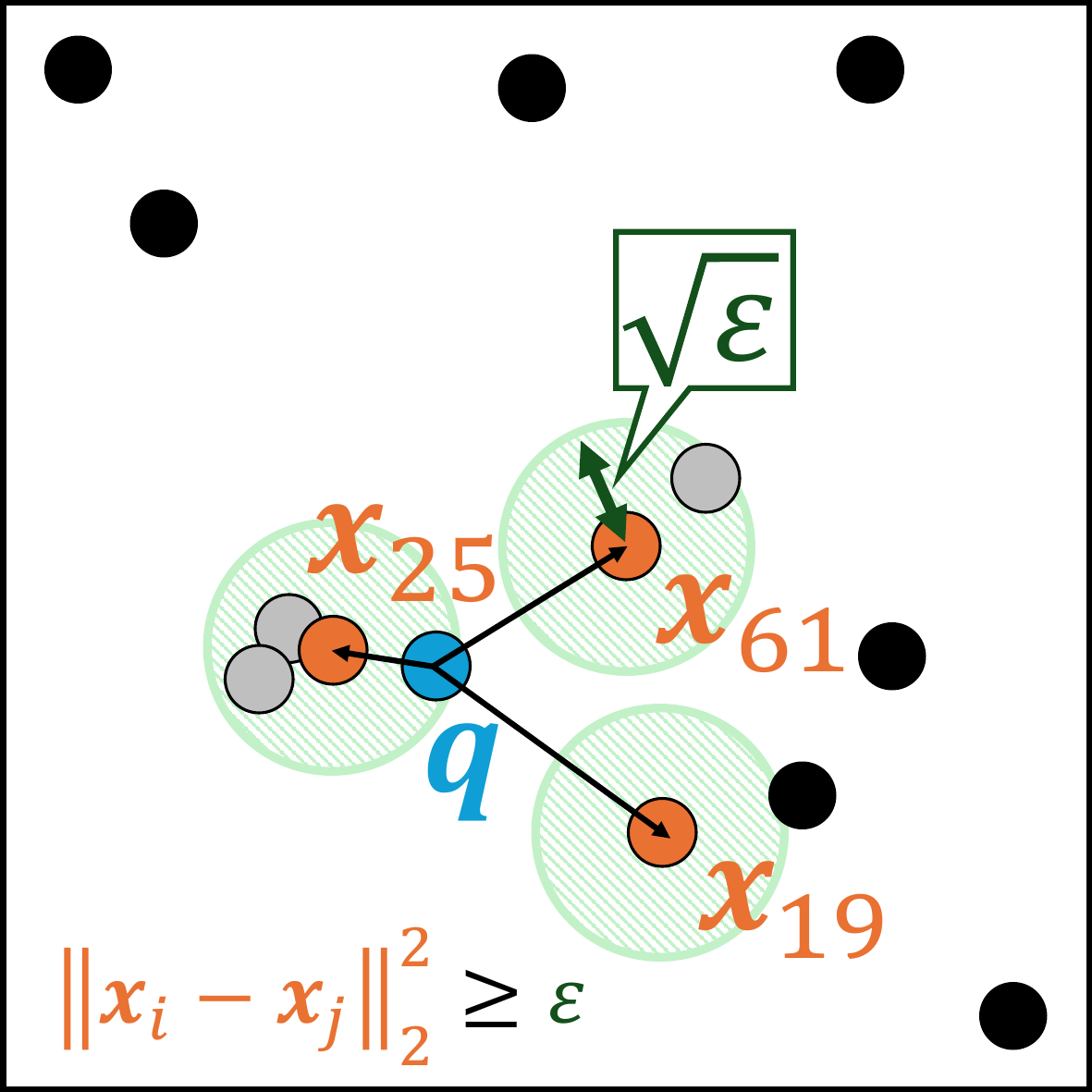}
    \subcaption{DNNS with the LotusFilter}\label{fig:teaser2}
  \end{minipage}
  \caption{(a) Usual ANNS. The search results are close to the query $\mathbf{q}$ but similar to each other. (b) DNNS with the proposed LotusFilter. The obtained vectors are at least $\sqrt{\varepsilon}$ apart from each other. The results are diverse despite being close to the query.}\label{fig:teaser}
\end{figure}

The essential problem with ANNS is the lack of diversity. For example, consider the case of image retrieval using ANNS. Suppose the query is an image of a cat, and the database contains numerous images of the same cat. In that case, the search results might end up being almost uniform, closely resembling the query. However, users might prefer more diverse results that differ from one another.

Diverse nearest neighbor search (DNNS)~\cite{sigmod_drosou2010, ftir_santos2015,kis_zheng2017} is a classical approach to achieving diverse search results but often suffers from slow performance. Existing DNNS methods first obtain $S$ candidates (search step) and then select $K(<S)$ results to ensure diversity (filter step). This approach is slow for three reasons. First, integrating modern ANN methods is often challenging. Second, selecting $K$ items from $S$ candidates is a subset selection problem, which is NP-hard. Lastly, existing methods require access to the original vectors during filtering, which often involves slow disk access if the vectors are not stored in memory.

We propose a fast search result diversification approach called \textit{LotusFilter}, which involves precomputing a cutoff table and using it to filter search results. Diverse outputs are ensured by removing vectors too close to each other. The data structure and algorithm are both simple and highly efficient (\cref{fig:teaser}), with the following contributions:

\begin{itemize}
\item As LotusFilter is designed to operate as a pure post-processing module, one can employ the latest ANNS method as a black-box backbone. This design provides a significant advantage over existing DNNS methods.
\item We introduce a strategy to train the hyperparameter, eliminating the need for complex parameter tuning.  
\item LotusFilter demonstrates exceptional efficiency for large-scale datasets, processing queries in only 0.02 [ms/query] for $9\times 10^5$ $1536$-dimensional vectors.  
\end{itemize}

\section{Related work}

\subsection{Approximate nearest neighbor search}

Approximate nearest neighbor search (ANNS) has been extensively studied across various fields~\cite{tutorial_matsui2020,tutorial_matsui2023}. Since around 2010, inverted indices~\cite{tpami_jegou2011,icassp_matsui2022,cvpr_douze2018,eccv_baranchuk2018,tpami_andre2021} and graph-based indices~\cite{tpami_malkov2020,mm_ono2023,sisap_oguri2023,vldb_fu2019,vldb_wang2021,neurips_subramanya2019} have become the standard, achieving search times under a millisecond for datasets of approximately $10^6$ items. These modern ANNS methods are significantly faster than earlier approaches, improving search efficiency by orders of magnitude.

\subsection{Diverse nearest neighbor search}

The field of recommendation systems has explored diverse nearest neighbor search (DNNS), especially during the 2000s~\cite{sigmod_drosou2010,ftir_santos2015,kis_zheng2017,sigir_carbonell1998}. Several approaches propose dedicated data structures as solutions~\cite{vldb_drosou2012,icmr_rao2016}, indicating that modern ANNS methods have not been fully incorporated into DNNS. Hirata et al. stand out as the only ones to use modern ANNS for diverse inner product search~\cite{recsys_hirata2022}.

Most existing DNNS methods load $S$ initial search results (the original $D$-dimensional vectors) and calculate all possible combinations even if approximate. This approach incurs a diversification cost of at least $\mathcal{O}(DS^2)$. In contrast, our LotusFilter avoids loading the original vectors or performing pairwise computations, instead scanning $S$ items directly. This design reduces the complexity to $\mathcal{O}(S)$, making it significantly faster than traditional approaches.

\subsection{Learned data structure}

Learned data structures~\cite{sigmod_kraska2018,inbook_ferragina2020} focus on enhancing classical data structures by integrating machine learning techniques. This approach has been successfully applied to well-known data structures such as B-trees~\cite{iclr_chen2023,icml_ferragina2020,vldb_ferragina2020,vldb_wu2021}, KD-trees~\cite{vldb_ding2020,sigmod_nathan2020,neurips_ws_hidaka2024}, and Bloom Filters~\cite{iclr_vaidya2021,neurips_sato2023,neurips_mitzenmacher2018,vldb_liu2020}. Our proposed method aligns with this trend by constructing a data structure that incorporates data distribution through learned hyperparameters for thresholding, similar to \cite{iclr_chen2023}.

\section{Preliminaries}

Let us describe our problem setting. Considering that we have $N$ $D$-dimensional database vectors $\{\mathbf{x}_n\}_{n=1}^N$, where $\mathbf{x}_n \in \mathbb{R}^D$. Given a query vector $\mathbf{q} \in \mathbb{R}^D$, our task is to retrieve $K$ vectors that are similar to $\mathbf{q}$ yet diverse, i.e., dissimilar to each other. We represent the obtained results as a set of identifiers, $\mathcal{K} \subseteq \{1, \dots, N \}$, where $|\mathcal{K}|=K$.

The search consists of two steps. First, we run ANNS and obtain $S (\ge K)$ vectors close to $\mathbf{q}$. These initial search results are denoted as $\mathcal{S}  \subseteq \{1, \dots, N \}$, where $|\mathcal{S}|=S$. The second step is diversifying the search results by selecting a subset $\mathcal{K} (\subseteq \mathcal{S})$ from the candidate set $\mathcal{S}$. This procedure is formulated as a subset selection problem. The objective here is to minimize the evaluation function $f \colon 2^\mathcal{S} \to \mathbb{R}$.
\begin{equation}
\argmin_{\mathcal{K} \subseteq \mathcal{S}, ~ |\mathcal{K}| = K} f(\mathcal{K}).
\label{eq:dnn}
\end{equation}
Here, $f$ evaluates how good $\mathcal{K}$ is, regarding both ``proximity to the query'' and ``diversity'', formulated as follows.
\begin{equation}
    f(\mathcal{K}) =
    \frac{1 - \lambda}{K} \sum_{k \in \mathcal{K}} \Vert \mathbf{q} - \mathbf{x}_k \Vert_2^2 -
    \lambda \min_{i, j \in \mathcal{K}, ~ i \ne j} \Vert \mathbf{x}_i - \mathbf{x}_j \Vert_2^2.
    \label{eq:f}
\end{equation}
The first term is the objective function of the nearest neighbor search itself, which indicates how close $\mathbf{q}$ is to the selected vectors. The second term is a measure of the diversity. Following \cite{aaai_amagata2023,recsys_hirata2022}, we define it as the closest distance among the selected vectors. Here $\lambda \in [0, 1]$ is a parameter that adjusts the two terms. If $\lambda=0$, the problem is a nearest neighbor search. If $\lambda=1$, the equation becomes the MAX-MIN diversification problem~\cite{or_ravi1994} that evaluates the diversity of the set without considering a query. This formulation is similar to the one used in \cite{recsys_hirata2022,icmr_rao2016,sigir_carbonell1998} and others.

Let us show the computational complexity of \cref{eq:dnn} is $\mathcal{O}(T + \binom{S}{K}DK^2)$, indicating that it is slow.
First, since it's not easy to represent the cost of ANNS, we denote ANNS's cost as $\mathcal{O}(T)$, where $T$ is a conceptual variable governing the behavior of ANNS. The first term in \cref{eq:f} takes $\mathcal{O}(DK)$, and the second term takes  $\mathcal{O}(DK^2)$ for a naive pairwise comparison. When calculating \cref{eq:dnn} naively, it requires $\binom{S}{K}$ computations for subset enumeration. Therefore, the total cost is $\mathcal{O}(T + \binom{S}{K}DK^2)$.

There are three main reasons why this operation is slow. First, it depends on $D$, making it slow for high-dimensional vectors since it requires maintaining and scanning original vectors. Second, the second term calculates all pairs of elements in $\mathcal{K}$ (costing $\mathcal{O}(K^2)$), which becomes slow for large $K$. Lastly, subset enumeration, $\binom{S}{K}$, is unacceptably slow.
In the next section, we propose an approximate and efficient solution with a complexity of $\mathcal{O}(T + S + KL)$, where $L$ is typically less than 100 for $N=9 \times 10^5$.

\section{LotusFilter Algorithm}

In this section, we introduce the algorithm of the proposed LotusFilter. The basic idea is to pre-tabulate the neighboring points for each $\mathbf{x}_n$ and then greedily prune candidates by looking up this table during the filtering step. 

Although LotusFilter is extremely simple, it is unclear whether the filtering works efficiently. Therefore, we introduce a data structure called OrderedSet to achieve fast filtering with a theoretical guarantee.

\subsection{Preprocessing}

\cref{alg:index} illustrates a preprocessing step. The inputs consist of database vectors $\{\mathbf{x}_n \}_{n=1}^N$ and the threshold for the squared distance, $\varepsilon \in \mathbb{R}$. In \texttt{L1}, we first construct $\mathcal{I}$, the index for ANNS. Any ANNS methods, such as HNSW~\cite{tpami_malkov2020} for faiss~\cite{arXiv_douze2024}, can be used here.

Next, we construct a cutoff table in \texttt{L2-3}. For each $\mathbf{x}_n$, we collect the set of IDs whose squared distance from $\mathbf{x}_n$ is less than $\varepsilon$. The collected IDs are stored as $\mathcal{L}_n$. We refer to these $\{ \mathcal{L}_n \}_{n=1}^N$ as a cutoff table (an array of integer arrays).

We perform a range search for each $\mathbf{x}_n$ to create the cutoff table. Assuming that the cost of the range search is also $\mathcal{O}(T)$, the total cost becomes $\mathcal{O}(NT)$. As demonstrated later in \cref{tbl:opt}, the runtime for $N = 9 \times 10^5$ is approximately one minute at most.

\begin{algorithm}[tb]
\DontPrintSemicolon
\KwIn{$\{\mathbf{x}_n\}_{n=1}^N \subseteq \mathbb{R}^D, ~ \varepsilon \in \mathbb{R}$}
$\mathcal{I} \gets \textsc{BuildIndex}(\{\mathbf{x}_n\}_{n=1}^N)$ \PythonStyleComment*[r]{ANNS} 
\For{$n \in \{1, \dots, N\}$}{
\hspace{-0.1cm}$\mathcal{L}_n \gets \{ i \in \{1, \dots, N \} \mid \Vert \mathbf{x}_n - \mathbf{x}_i \Vert_2^2 < \varepsilon, n \ne i\}$
}
\Return{$\mathcal{I}, \{\mathcal{L}_n\}_{n=1}^N$}
\caption{\textsc{Build}}
\label{alg:index}
\end{algorithm}

\begin{algorithm}[tb]
\DontPrintSemicolon
\KwIn{$\mathbf{q} \in \mathbb{R}^D, ~ S, ~ K (\le S), ~ \mathcal{I}, ~ \{\mathcal{L}_n\}_{n=1}^N$}
$\mathcal{S} \gets \mathcal{I}.\textsc{Search}(\mathbf{q}, S)$ \PythonStyleComment*[r]{$\mathcal{S} \subseteq \{1, \dots, N \}$}
$\mathcal{K} \gets \varnothing$ \; 
\While(\PythonStyleComment*[f]{At most $K$ times}){$|\mathcal{K}| < K$}{
$k \gets \textsc{Pop}(\mathcal{S})$ \PythonStyleComment*[r]{$\mathcal{O}(L)$} 
$\mathcal{K} \gets \mathcal{K} \cup \{k\}$ \PythonStyleComment*[r]{$\mathcal{O}(1)$} 
$\mathcal{S} \gets \mathcal{S} \setminus \mathcal{L}_k$ \PythonStyleComment*[r]{$\mathcal{O}(L)$}
}
\Return{$\mathcal{K}$} \PythonStyleComment*[r]{$\mathcal{K} \subseteq \mathcal{S}$ where $|\mathcal{K}| = K$}
\caption{\textsc{Search and Filter}}
\label{alg:divsearch}
\end{algorithm}

\subsection{Search and Filtering}

\begin{figure*}[tb]
  \begin{minipage}[b]{0.32\hsize}
    \centering
    \includegraphics[width=1.0\linewidth]{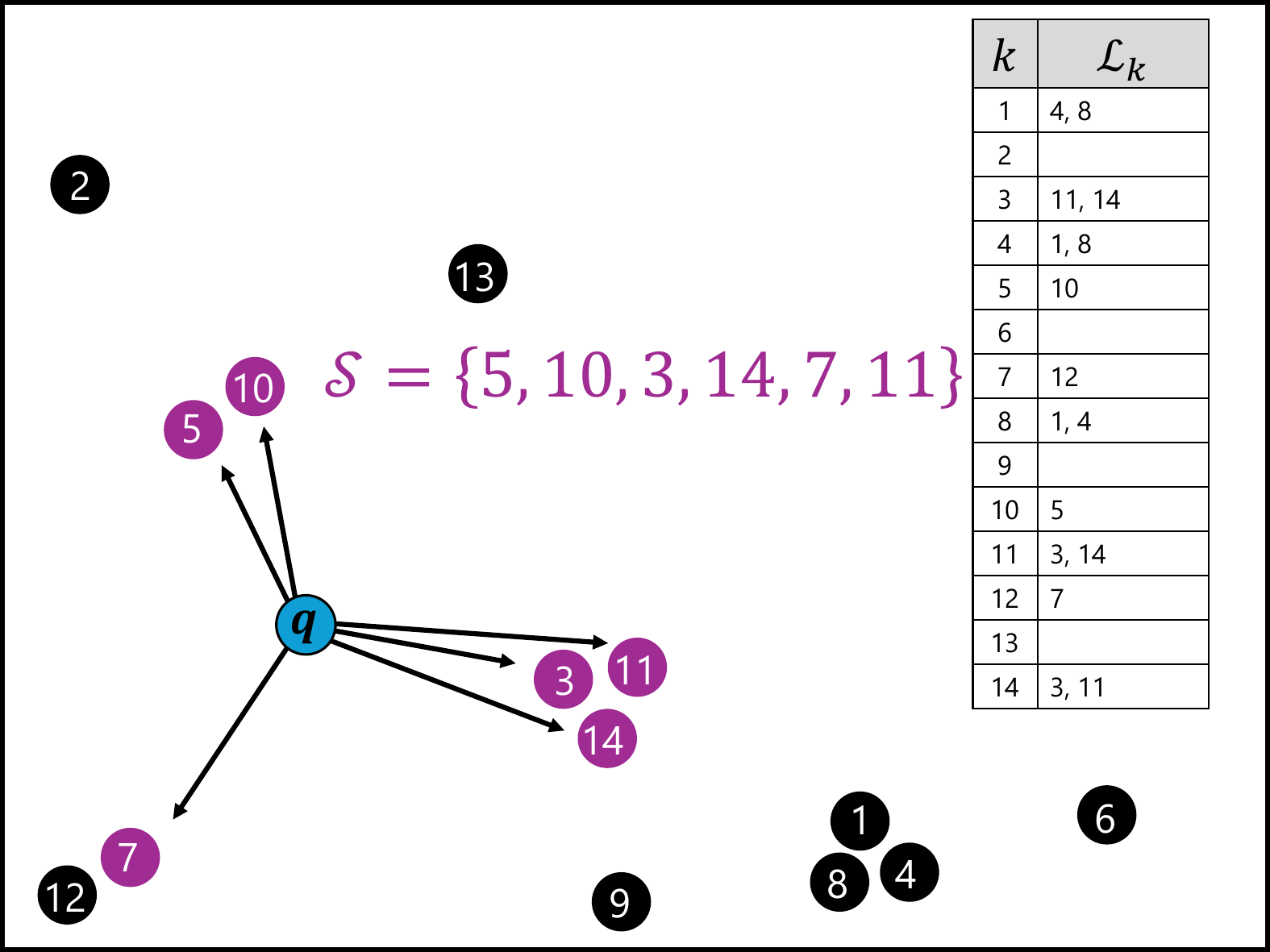}
    \subcaption{Initial search result}\label{fig:flow1}
  \end{minipage}
  \hfill
  \begin{minipage}[b]{0.32\hsize}
    \centering
    \includegraphics[width=1.0\linewidth]{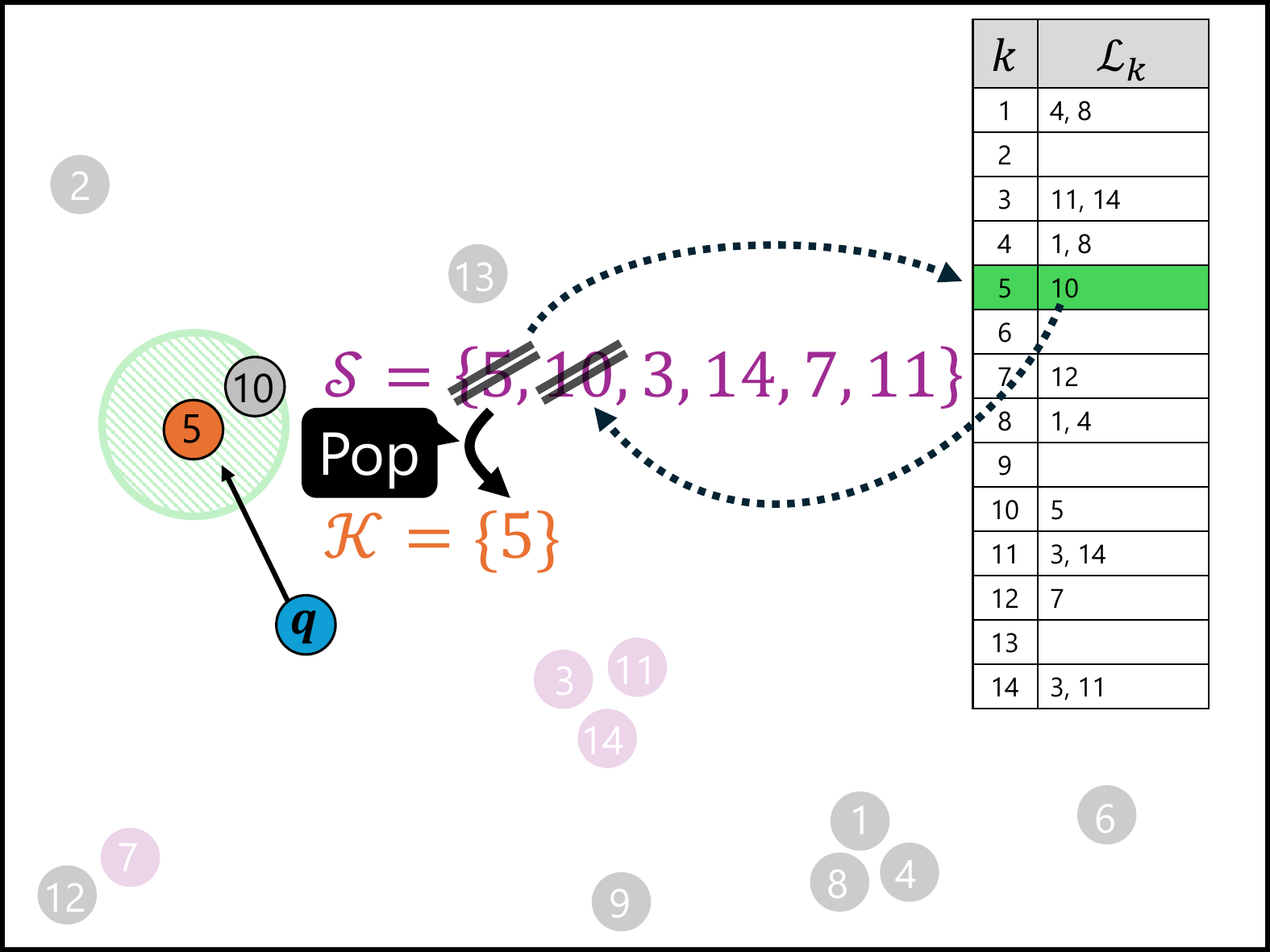}
    \subcaption{Accept the 1\textsuperscript{st} candidate. Cutoff.}\label{fig:flow2}
  \end{minipage}
  \hfill
  \begin{minipage}[b]{0.32\hsize}
    \centering
    \includegraphics[width=1.0\linewidth]{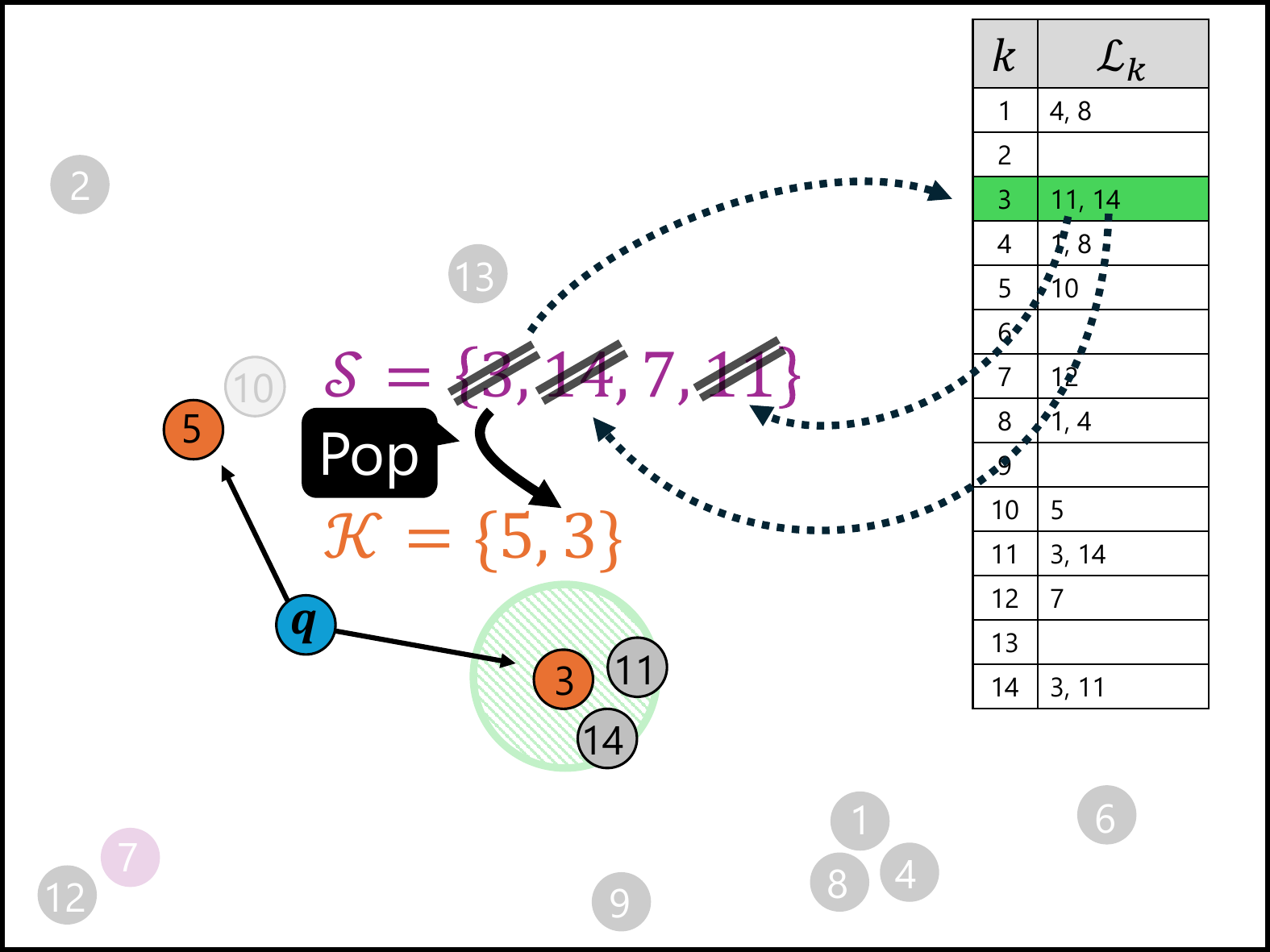}
    \subcaption{Accept the 2\textsuperscript{nd} candidate. Cutoff.}\label{fig:flow3}
  \end{minipage}
  \caption{Overview of the proposed LotusFilter ($D=2,~N=14,~S=6,~K=2$)}\label{fig:flow}
\end{figure*}

The search and filtering process is our core contribution and described in \cref{alg:divsearch} and \cref{fig:flow}. The inputs are a query $\mathbf{q} \in \mathbb{R}^D$, the number of initial search results $S (\le N)$, the number of final results $K (\le S)$, the ANNS index $\mathcal{I}$, and the cutoff table $\{\mathcal{L}_n\}_{n=1}^N$.

As the search step, we first run ANNS in \texttt{L1} (\cref{fig:flow1}) to obtain the candidate set $\mathcal{S} \subseteq \{1, \dots, N \}$. In \texttt{L2}, we prepare an empty integer set $\mathcal{K}$ to store the final results. 

The filtering step is described in \texttt{L3-6} where IDs are added to the set $\mathcal{K}$ until its size reaches $K$. In \texttt{L4}, we pop the ID $k$ from $\mathcal{S}$, where $\mathbf{x}_k$ is closest to the query, and add it to $\mathcal{K}$ in \texttt{L5}. Here, \texttt{L6} is crucial: for the current focus $k$, the IDs of vectors close to $\mathbf{x}_k$ are stored in $\mathcal{L}_k$. Thus, by removing $\mathcal{L}_k$ from $\mathcal{S}$, we can eliminate vectors similar to $\mathbf{x}_k$ (\cref{fig:flow2}). Repeating this step (\cref{fig:flow3}) ensures that elements in $\mathcal{K}$ are at least $\sqrt{\varepsilon}$ apart from each other.\footnote{The filtering step involves removing elements within a circle centered on a vector (i.e., eliminating points inside the green circle in \cref{fig:flow2,fig:flow3}). This process evokes the imagery of lotus leaves, which inspired us to name the proposed method ``LotusFilter''.}

Here, the accuracy of the top-1 result (Recall@1) after filtering remains equal to that of the initial search results. This is because the top-1 result from the initial search is always included in $\mathcal{K}$ in \texttt{L4} during the first iteration.

Note that the proposed approach is faster than existing methods for the following intuitive reasons:
\begin{itemize}
\item The filtering step processes candidates sequentially ($\mathcal{O}(S)$) in a fast, greedy manner. Many existing methods determine similar items in $\mathcal{S}$ by calculating distances on the fly, requiring $\mathcal{O}(DS^2)$ for all pairs, even when approximated. In contrast, our approach precomputes distances, eliminating on-the-fly calculations and avoiding pairwise computations altogether.
\item The filtering step does not require the original vectors, making it a pure post-processing step for any ANNS modules. In contrast, many existing methods depend on retaining the original vectors and computing distances during the search. Therefore, they cannot be considered pure post-processing, especially since modern ANNS methods often use compressed versions of the original vectors.
\end{itemize}
In \cref{sec:comp_anal}, we discuss the computational complexity in detail and demonstrate that it is $\mathcal{O}(T+S+KL)$.

\subsection{Memory consumption}
With $L=\frac{1}{N}\sum_{n=1}^N|\mathcal{L}_n|$ being the average length of $\mathcal{L}_n$, the memory consumption of the LotusFilter is $64LN$ [bit] with the naive implementation using 64 bit integers.
It is because, from \cref{alg:divsearch}, the LotusFilter requires only a cutoff table $\{\mathcal{L}_n\}_{n=1}^N$ as an auxiliary data structure. 

This result demonstrates that the memory consumption of our proposed LotusFilter can be accurately estimated in advance. We will later show in \cref{tbl:comp} that, for $N=9\times 10^5$, the memory consumption is $1.14 \times 10^9$ [bit] $=$ $136$ [MiB].

\subsection{Theoretical guarantees on diversity}
\label{sec:theory}

For the results obtained by \cref{alg:divsearch}, the diversity term (second term) of the objective function \cref{eq:f} is bounded by $-\varepsilon$ as follows. We construct the final results of \cref{alg:divsearch}, $\mathcal{K}$, by adding an element one by one in \texttt{L4}. For each loop, given a new $k$ in \texttt{L4}, all items whose squared distance to $k$ is less than $\varepsilon$ must be contained in $\mathcal{L}_k$. Such close items are removed from the candidates $\mathcal{S}$ in \texttt{L6}. Thus, for all $i, j \in \mathcal{K}$ where $i \ne j$, 
$\Vert \mathbf{x}_i - \mathbf{x}_j \Vert_2^2 \ge \varepsilon$ holds, resulting in $ - \min_{i, j \in \mathcal{K}, i \ne j} \Vert \mathbf{x}_i - \mathbf{x}_j \Vert_2^2 \le - \varepsilon$.

This result shows that the proposed LotusFilter can always ensure diversity, where we can adjust the degree of diversity using the parameter $\varepsilon$.

\subsection{Safeguard against over-pruning}
Filtering can sometimes prune too many candidates from $\mathcal{S}$. To address this issue, a safeguard mode is available as an option. Specifically, if $\mathcal{L}_k$ in \texttt{L6} is large and $|\mathcal{S}|$ drops to zero, no further elements can be popped. If this occurs, $\mathcal{K}$ returned by \cref{alg:divsearch} may have fewer elements than $K$.

With the safeguard mode activated, the process will terminate immediately when excessive pruning happens in \texttt{L6}. The remaining elements in $\mathcal{S}$ will be added to $\mathcal{K}$. This safeguard ensures that the final result meets the condition $|\mathcal{K}| = K$. In this scenario and only in this scenario, the theoretical result discussed in \cref{sec:theory} does not hold.

\section{Complexity Analysis} 
\label{sec:comp_anal}

We prove that the computational complexity of \cref{alg:divsearch} is $\mathcal{O}(T+S+KL)$ on average. This is fast because just accessing the used variables requires the same cost.

The filtering step of our LotusFilter (\texttt{L3-L6} in \cref{alg:divsearch}) is quite simple, but it is unclear whether it can be executed efficiently. Specifically, for $\mathcal{S}$, \texttt{L4} requires a pop operation, and \texttt{L6} removes an element. These two operations cannot be efficiently implemented with basic data structures like arrays, sets, or priority queues.

To address this, we introduce a data structure called OrderedSet. While OrderedSet has a higher memory consumption, it combines the properties of both a set and an array. We demonstrate that by using OrderedSet, the operations in the while loop at \texttt{L3} can be run in $O(L)$.

\subsection{Main result}

\begin{proposition}
The computational complexity of the search and filter algorithm in \cref{alg:divsearch} is $\mathcal{O}(T+S+KL)$ on average using the OrderedSet data structure for $\mathcal{S}$.
\end{proposition}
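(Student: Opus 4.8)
The plan is to charge each summand of $\mathcal{O}(T + S + KL)$ to a distinct phase of \cref{alg:divsearch} and to justify the per-line costs annotated in the algorithm by pinning down the OrderedSet representation and giving an amortized analysis of its \textsc{Pop}.

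First I would fix the data structure. I would store $\mathcal{S}$ as an OrderedSet consisting of (i) an array $A$ of length $S$ holding the candidate IDs in the distance-to-$\mathbf{q}$ order returned by ANNS in \texttt{L1}, (ii) a membership structure (a hash set, or a length-$N$ boolean table indexed by ID) recording which IDs are still live, and (iii) a monotone head pointer $h$ initialized to the first slot. Reading the $S$ results into $A$ and the membership structure costs $\mathcal{O}(S)$, which accounts for the $S$ term. The set difference in \texttt{L6}, $\mathcal{S} \gets \mathcal{S} \setminus \mathcal{L}_k$, is then performed by iterating over the $|\mathcal{L}_k|$ elements of $\mathcal{L}_k$ and marking each as dead in the membership structure; since each mark is $\mathcal{O}(1)$ on average, \texttt{L6} costs $\mathcal{O}(|\mathcal{L}_k|) = \mathcal{O}(L)$ on average.

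The only genuinely nontrivial step is \textsc{Pop} in \texttt{L4}, and this is where I expect the main obstacle to lie. \textsc{Pop} advances $h$ forward, skipping every slot whose ID is already marked dead, until it reaches a live ID, which it then returns and removes. A single \textsc{Pop} can cost $\Theta(S)$ when it must skip a long run of deleted entries, so the $\mathcal{O}(L)$ in the comment cannot be a worst-case bound and must be read as amortized. The key structural fact is that $h$ is monotone nondecreasing and never revisits a slot, so every slot of $A$ is examined at most once across all \textsc{Pop} calls. I would then use an accounting argument: each examined slot is either (a) the live ID actually returned, of which there are exactly $K$ over the whole run, or (b) a slot that was already dead when reached, which can only have been killed by an earlier execution of \texttt{L6} (a returned slot is live when examined and is never revisited); the latter number is bounded by $\sum_{k} |\mathcal{L}_k| \le KL$. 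Hence the total work over all \textsc{Pop} calls is $\mathcal{O}(K + KL) = \mathcal{O}(KL)$, even though no individual call is guaranteed to be cheap.

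Combining the phases gives the result: $\mathcal{O}(T)$ for the ANNS call in \texttt{L1}, $\mathcal{O}(S)$ to initialize the OrderedSet, and $\mathcal{O}(KL)$ in aggregate for the at-most-$K$ iterations of the loop (the \texttt{L6} cost summing to $\sum_k \mathcal{O}(|\mathcal{L}_k|) = \mathcal{O}(KL)$ and the amortized \textsc{Pop} cost summing to $\mathcal{O}(KL)$ by the argument above), for a total of $\mathcal{O}(T + S + KL)$, where the stray $K$ is absorbed since $K \le S$. The qualifier \emph{on average} covers both the expected $\mathcal{O}(1)$ hash operations and the interpretation of $L$ as the mean length of $\mathcal{L}_n$. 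The crux to stress in the writeup is that neither a sorted array (cheap \textsc{Pop} but $\mathcal{O}(S)$ deletion) nor a plain hash set (cheap deletion but no order) supports both operations efficiently, whereas the OrderedSet does once \textsc{Pop} is amortized against the deletions performed in \texttt{L6}.
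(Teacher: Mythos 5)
Your proof is correct, and its top-level decomposition coincides with the paper's: $\mathcal{O}(T)$ for the ANNS call in \texttt{L1}, $\mathcal{O}(S)$ to build the OrderedSet, $\mathcal{O}(1)$ expected insertion in \texttt{L5}, $\mathcal{O}(L)$ average work for the deletions in \texttt{L6}, and at most $K$ loop iterations. Where you genuinely diverge is on the crux, \textsc{Pop}. The paper bounds each call individually: it costs a call at $\mathcal{O}(\Delta)$, where $\Delta$ is the number of dead slots the head pointer skips, and asserts $\Delta \le L$ on the grounds that at most $L$ elements are removed between two consecutive calls to \textsc{Pop}. You instead decline to bound any single call and give an aggregate accounting argument: the head pointer is monotone, so each slot of the array is examined at most once over the entire run, and each examined slot is either one of the $K$ returned elements or a slot killed by some earlier execution of \texttt{L6}, of which there are at most $\sum_k |\mathcal{L}_k| \approx KL$ in total, giving $\mathcal{O}(K + KL)$ for all \textsc{Pop} calls combined. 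Your route is not merely different but strictly more careful: the paper's per-call claim is too strong as literally stated, because the dead slots skipped by one \textsc{Pop} need not have been deleted since the \emph{last} \textsc{Pop} --- deletions from several earlier rounds of \texttt{L6} can accumulate just ahead of the head (an early $\mathcal{L}_k$ may kill a contiguous run far downstream that a much later \textsc{Pop} must traverse), so a single call can exceed $\mathcal{O}(L)$ even though the total stays $\mathcal{O}(KL)$, which is exactly what your amortization certifies and all that the proposition's ``on average'' bound requires. You also correctly flag the residual heuristic shared by both arguments: $L$ is the mean of $|\mathcal{L}_n|$ over all $n$, not a bound on the lists of the particular popped elements, and the hash operations are expected-time, so the ``on average'' qualifier is doing genuine work in either proof.
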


\begin{proof}
In \texttt{L1}, the search takes $\mathcal{O}(T)$, and the initialization of $\mathcal{S}$ takes $\mathcal{O}(S)$. The loop in \texttt{L3} is executed at most $K$ times. Here, the cost inside the loop is $\mathcal{O}(L)$. That is, \textsc{Pop} on $\mathcal{S}$ takes $\mathcal{O}(L)$ in \texttt{L4}. Adding an element to a set takes $\mathcal{O}(1)$ in \texttt{L5}. The $L$ times deletion for $\mathcal{S}$ in \texttt{L6} takes $\mathcal{O}(L)$. In total, the computational cost is $\mathcal{O}(T+S+KL)$.
\end{proof}

To achieve the above, we introduce the data structure called OrderedSet to represent $\mathcal{S}$. An OrderedSet satisfies $\mathcal{O}(S)$ for initialization, $\mathcal{O}(L)$ for \textsc{Pop}, and $\mathcal{O}(1)$ for the deletion of a single item.

\subsection{OrderedSet}

OrderedSet, as its name suggests, is a data structure representing a set while maintaining the order of the input array. OrderedSet combines the best aspects of arrays and sets at the expense of memory consumption. See the swift-collections package\footnote{\url{https://swiftpackageindex.com/apple/swift-collections/1.1.0/documentation/orderedcollections/orderedset}} in the Swift language for the reference implementation. We have found that this data structure implements the \textsc{Pop} operation in $\mathcal{O}(L)$.

For a detailed discussion of the implementation, hereafter, we consider the input to OrderedSet as an array $\mathbf{v} = [v[1], v[2], \dots, v[V]]$ with $V$ elements (i.e., the input to $\mathcal{S}$ in \texttt{L1} of \cref{alg:divsearch} is an array of integers).

\paragraph{Initialization:}
We show that the initialization of OrderedSet takes $\mathcal{O}(V)$. OrderedSet takes an array $\mathbf{v}$ of length $V$ and converts it into a set (hash table) $\mathcal{V}$:
\begin{equation}
\mathcal{V} \gets \textsc{Set}(\mathbf{v}).
\end{equation}
This construction takes $\mathcal{O}(V)$. Then, a counter $c \in \{1, \dots, V\}$ indicating the head position is prepared and initialized to $c \gets 1$. The OrderedSet is a simple data structure that holds $\mathbf{v}$, $\mathcal{V}$, and $c$.
OrderedSet has high memory consumption because it retains both the original array $\mathbf{v}$ and its set representation $\mathcal{V}$.
An element in $\mathcal{V}$ must be accessed and deleted in constant time on average. We utilize a fast open-addressing hash table \texttt{boost::unordered\_flat\_set} in our implementation\footnote{\url{https://www.boost.org/doc/libs/master/libs/unordered/doc/html/unordered/intro.html}}.
In \texttt{L1} of \cref{alg:divsearch}, this initialization takes $\mathcal{O}(S)$.

\paragraph{Remove:}
The operation to remove an element $a$ from OrderedSet is implemented as follows with an average time complexity of $\mathcal{O}(1)$: 
\begin{equation}
    \mathcal{V} \gets \mathcal{V} \setminus \{a\}.
\end{equation}
In other words, the element is deleted only from $\mathcal{V}$. As the element in $\mathbf{v}$ remains, the deletion is considered shallow.
In \texttt{L6} of \cref{alg:divsearch}, the $L$ removals result in an $\mathcal{O}(L)$ cost.

\paragraph{Pop:}
Finally, the \textsc{Pop} operation, which removes the first element, is realized in $\mathcal{O}(\Delta)$ as follows:
\begin{itemize}
    \item Step 1: Repeat $c \gets c+1$ until $v[c] \in \mathcal{V}$
    \item Step 2: $\mathcal{V} \gets \mathcal{V} \setminus \{v[c]\}$
    \item Step 3: Return $v[c]$
    \item Step 4: $c \gets c+1$
\end{itemize}
Step 1 moves the counter until a valid element is found. Here, the previous head (or subsequent) elements might have been removed after the last call to \textsc{Pop}. In such cases, the counter must move along the array until it finds a valid element. Let $\Delta$ be the number of such moves; this counter update takes $\mathcal{O}(\Delta)$. In Step 2, the element is removed in $\mathcal{O}(1)$ on average. In Step 3, the removed element is returned, completing the \textsc{Pop} operation. Step 4 updates the counter position accordingly.

Thus, the total time complexity is $\mathcal{O}(\Delta)$. Here, $\Delta$ represents the ``number of consecutively removed elements from the previous head position since the last call to \textsc{Pop}''. In our problem setting, between two calls to \textsc{Pop}, at most $L$ elements can be removed (refer to \texttt{L6} in \cref{alg:divsearch}). Thus,
\begin{equation}
\Delta \le L.
\end{equation}
Therefore, the \textsc{Pop} operation is $\mathcal{O}(L)$ in \cref{alg:divsearch}.

Using other data structures, achieving both \textsc{Pop} and \textsc{Remove} operations efficiently is challenging. With an array, \textsc{Pop} can be accomplished in $ \mathcal{O}(\Delta)$ in the same way. However, removing a specific element requires a linear search, which incurs a cost of $\mathcal{O}(V)$. On the other hand, if we use a set (hash table), deletion can be done in $\mathcal{O}(1)$, but \textsc{Pop} cannot be implemented. Please refer to the supplemental material for a more detailed comparison of data structures.

\section{Training} 
\label{sec:opt}

The proposed method intuitively realizes diverse search by removing similar items from the search results, but it is unclear how it contributes explicitly to the objective function \cref{eq:dnn}. Here, by learning the threshold $\varepsilon$ in advance, we ensure that our LotusFilter effectively reduces \cref{eq:dnn}.

First, let's confirm the parameters used in our approach; $\lambda, S, K,$ and $\varepsilon$.
Here, $\lambda$ is set by the user to balance the priority between search and diversification. $K$ is the number of final search results and must also be set by the user. $S$ governs the accuracy and speed of the initial search. Setting $S$ is not straightforward, but it can be determined based on runtime requirements, such as setting $S=3K$. The parameter $\varepsilon$ is less intuitive; a larger $\varepsilon$ increases the cutoff table size $L$, impacting both results and runtime. The user should set $\varepsilon$ minimizing $f$, but this setting is not straightforward.

To find the optimal $\varepsilon$, we rewrite the equations as follows. First, since $\mathcal{S}$ is the search result of $\mathbf{q}$, we can write $\mathcal{S} = \mathrm{NN}(\mathbf{q},~S)$. Here, we explicitly express the solution $f^*$ of \cref{eq:dnn} as a function of $\varepsilon$ and $\mathbf{q}$ as follows.

\begin{equation}
    f^*(\varepsilon, \mathbf{q}) = 
    \argmin_{\mathcal{K} \subseteq \mathrm{NN}(\mathbf{q},~S), ~ |\mathcal{K}|
    = K} f(\mathcal{K}).
\end{equation}
We would like to find $\varepsilon$ that minimizes the above.
Since $\mathbf{q}$ is a query data provided during the search phase, we cannot know it beforehand. Therefore, we prepare training query data $\mathcal{Q}_\mathrm{train} \subset \mathbb{R}^D$ in the training phase. This training query data can usually be easily prepared using a portion of the database vectors. Assuming that this training query data is drawn from a distribution similar to the test query data, we solve the following.
\begin{equation}
    \varepsilon^* = \argmin_{\varepsilon} \underset{\mathbf{q} \in \mathcal{Q}_\mathrm{train}}{\mathbb{E}} \left [ f^*(\varepsilon, \mathbf{q}) \right ].
    \label{eq:opt}
    \end{equation}

This problem is a nonlinear optimization for a single variable without available gradients. One could apply a black-box optimization~\cite{kdd_akiba2019} to solve this problem, but we use a more straightforward approach, bracketing~\cite{book_kochenderfer2019}, which recursively narrows the range of the variable. See the supplementary material for details. This simple method achieves sufficient accuracy as shown later in \cref{fig:epsilon_f}.

\section{Evaluation}
\label{sec:exp}

\begin{table*}[tb]
    \centering
    \begin{tabular}{@{}lllllllll@{}} \toprule
            & \multicolumn{3}{c}{Cost function ($\downarrow$)} & \multicolumn{3}{c}{Runtime [ms/query] ($\downarrow$)} & \multicolumn{2}{c}{Memory overhead [bit] ($\downarrow$)} \\ \cmidrule(l){2-4} \cmidrule(l){5-7} \cmidrule(l){8-9}
        Filtering & Search & Diversification & Final ($f$)& Search & Filter & Total & $\{ \mathbf{x}_n \}_{n=1}^N$ & $\{ \mathcal{L}_n \}_{k=1}^K$ \\ \midrule
        None (Search only)     & $0.331$ & $-0.107$ & $0.200$ & $0.855$ & -       & $\mathbf{0.855}$ & -                    & - \\ 
        Clustering             & $0.384$ & $-0.152$ & $0.223$ & $0.941$ & $6.94$  & $7.88$  & $4.42 \times {10}^{10}$ & - \\      
        GMM~\cite{or_ravi1994} & $0.403$ & $-0.351$ & \underline{$0.177$} & $0.977$ & $13.4$  & $14.4$  & $4.42 \times {10}^{10}$ & - \\      
        LotusFilter (Proposed) & $0.358$ & $-0.266$ & $\mathbf{0.171}$ & $1.00$ & $0.02$ & \underline{$1.03$}  & -                    & $1.14 \times {10}^9$ \\ \bottomrule
    \end{tabular}
    \caption{Comparison with existing methods for the OpenAI dataset. The parameters are $\lambda=0.3,~K=100,~S=500,~\varepsilon^*=0.277,$ and $L=19.8$. The search step is with HNSW~\cite{tpami_malkov2020}. Bold and underlined scores represent the best and second-best results, respectively.}
    \label{tbl:comp}
\end{table*}

In this section, we evaluate the proposed LotusFilter. All experiments were conducted on an AWS EC2 \texttt{c7i.8xlarge} instance (3.2GHz  Intel Xeon CPU, 32 virtual cores, 64GiB memory). We ran preprocessing using multiple threads while the search was executed using a single thread. For ANNS, we used HNSW~\cite{tpami_malkov2020} from the faiss library~\cite{arXiv_douze2024}. The parameters of HNSW were \texttt{efConstruction=40}, \texttt{efSearch=16}, and \texttt{M=256}. LotusFilter is implemented in C++17 and called from Python using nanobind~\cite{web_nanobind}. Our code is publicly available at \url{https://github.com/matsui528/lotf}.

We utilized the following datasets:

\begin{itemize}
\item OpenAI Dataset~\cite{arXiv_simhadri2024, arXiv_oguri2024}: This dataset comprises 1536-dimensional text features extracted from WikiText using OpenAI's text embedding model. It consists of 900,000 base vectors and 100,000 query vectors. We use this dataset for evaluation, considering that the proposed method is intended for application in RAG systems.
\item MS MARCO Dataset~\cite{arxiv_bajaj2016}: This dataset includes Bing search logs. We extracted passages from the v1.1 validation set, deriving 768-dimensional BERT features~\cite{naacl_devlin2019}, resulting in 38,438 base vectors and 1,000 query vectors. We used this dataset to illustrate redundant texts.
\item Revisited Paris Dataset~\cite{cvpr_radenovic2018}: This image dataset features landmarks in Paris, utilizing 2048-dimensional R-GeM~\cite{tpami_radenovic2018} features with 6,322 base and 70 query vectors. It serves as an example of data with many similar images.
\end{itemize}
We used the first 1,000 vectors from base vectors for hyperparameter training ($\mathcal{Q}_\mathrm{train}$ in \cref{eq:opt}).

\subsection{Comparison with existing methods}
\paragraph{Existing methods}
We compare our methods with existing methods in \cref{tbl:comp}. The existing methods are the ANNS alone (i.e., HNSW only), clustering, and the GMM~\cite{or_ravi1994,aaai_amagata2023}.
\begin{itemize}
\item ANNS alone (no filtering): An initial search is performed to obtain $K$ results. We directly use them as the output.
\item Clustering: After obtaining the initial search result $\mathcal{S}$, we cluster the vectors $\{\mathbf{x}_s\}_{s \in \mathcal{S}}$ into $K$ groups using k-means clustering. The nearest neighbors of each centroid form the final result $\mathcal{K}$. Clustering serves as a straightforward approach to diversifying the initial search results with the running cost of $\mathcal{O}(DKS)$. To perform clustering, we require the original vectors $\{\mathbf{x}_n\}_{n=1}^N$.
\item GMM: GMM is a representative approach for extracting a diverse subset from a set. After obtaining the initial search result $\mathcal{S}$, we iteratively add elements to $\mathcal{K}$ according to \( j^* = \arg\max_{j \in \mathcal{S} \setminus \mathcal{K}} \left( \min_{i \in \mathcal{K}} \Vert \mathbf{x}_i - \mathbf{x}_j \Vert_2^2 \right) \), updating $\mathcal{K}$ as \(\mathcal{K} \gets \mathcal{K} \cup \{j^*\}\) in each step. This GMM approach produces the most diverse results from the set $\mathcal{S}$. With a bit of refinement, GMM can be computed in \(\mathcal{O}(DKS)\). Like k-means clustering, GMM also requires access to the original vectors \(\{\mathbf{x}_n\}_{n=1}^N\).
\end{itemize}
We consider the scenario of obtaining $\mathcal{S}$ using modern ANNS methods like HNSW, followed by diversification. Since no existing methods can be directly compared in this context, we use simple clustering and GMM as baselines.

Well-known DNNS methods, like Maximal Marginal Relevance (MMR)~\cite{sigir_carbonell1998}, are excluded from comparison due to their inability to directly utilize ANNS, resulting in slow performance. Directly solving \cref{eq:dnn} is also excluded because of its high computational cost. Note that MMR can be applied to $\mathcal{S}$ rather than the entire database vectors. This approach is similar to the GMM described above and can be considered an extension that takes the distance to the query into account. Although it has a similar runtime as GMM, its score was lower, so we reported the GMM score.

In the ``Cost function'' of \cref{tbl:comp}, the ``Search'' refers to the first term in \cref{eq:f}, and the ``Diversification'' refers to the second term. The threshold $\varepsilon$ is the value obtained from \cref{eq:opt}. The runtime is the average of three trials.

\paragraph{Results}
From \cref{tbl:comp}, we observe the following results:
\begin{itemize}
\item In the case of NN search only, it is obviously the fastest; however, the results are the least diverse (with a diversification term of $-0.107$).
\item Clustering is simple but not promising. The final score is the worst ($f=0.223$), and it takes 10 times longer than search-only ($7.88$ [ms/query]).
\item GMM achieves the most diverse results ($-0.351$), attaining the second-highest final performance ($f=0.177$). However, GMM is slow ($14.4$ [ms/query]), requiring approximately 17 times the runtime of search-only.
\item The proposed LotusFilter achieves the highest performance ($f=0.171$). It is also sufficiently fast ($1.03$ [ms/query]), with the filtering step taking only $0.02$ [ms/query]. As a result, it requires only about 1.2 times the runtime of search-only.
\item Clustering and GMM consume 40 times more memory than LotusFilter. Clustering and GMM require the original vectors, costing $32ND$ [bits] using 32-bit floating-points, which becomes especially large for datasets with a high $D$. In contrast, the memory cost of the proposed method is $64LN$ using 64-bit integers.
\end{itemize}
The proposed method is an effective filtering approach regarding performance, runtime, and memory efficiency, especially for high-dimensional vectors. For low-dimensional vectors, simpler baselines may be more effective. Please see the supplemental material for details.

\begin{figure*}[tb]
  \begin{minipage}[]{0.32\linewidth}
    \centering
    \includegraphics[width=1.0\linewidth]{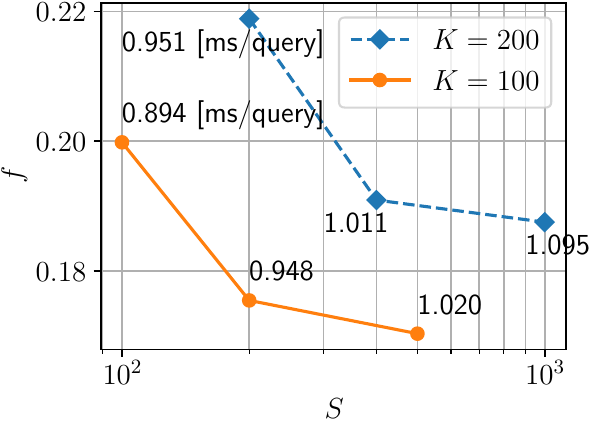}
    \caption{Fix $K$, vary $S$}\label{fig:s_f}
  \end{minipage} \hfill
  \begin{minipage}[]{0.32\linewidth}
    \centering
    \includegraphics[width=1.0\linewidth]{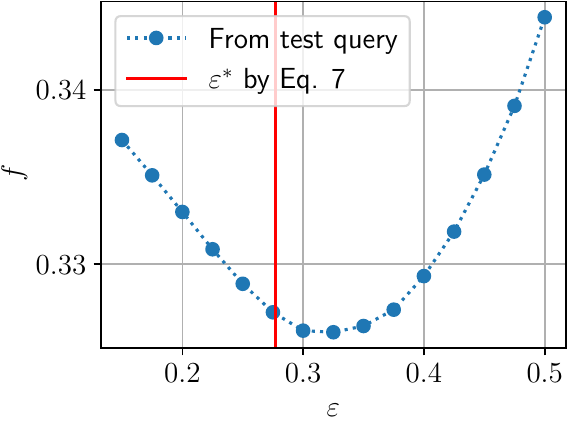}
    \caption{Evaluate $\varepsilon^*$ by \cref{eq:opt}}\label{fig:epsilon_f}
  \end{minipage} \hfill
  \begin{minipage}[]{0.32\linewidth}
    \centering
    \scalebox{0.75}{
    \begin{tabular}{@{}llllll@{}} \toprule
        & & &  & \multicolumn{2}{c}{Runtime [s]} \\ \cmidrule(l){5-6}
        $N$ & $\lambda$ & $\varepsilon^*$ & $L$ & Train & Build \\ \midrule
        $9 \times 10^3$ & 0.3 & 0.39 & 8.7  & 96 & 0.16 \\ 
                        & 0.5 & 0.42 & 19.6 & 99 & 0.17 \\
        $9 \times 10^4$ & 0.3 & 0.33 & 10.1 & 176 & 3.8 \\ 
                        & 0.5 & 0.36 & 23.5 & 177 & 3.9 \\
        $9 \times 10^5$ & 0.3 & 0.27 & 18.4 & 1020 & 54 \\ 
                        & 0.5 & 0.29 & 29.3 & 1087 & 54 \\ \bottomrule
        \end{tabular}
    }
    \captionof{table}{Train and build}\label{tbl:opt}

  \end{minipage}
\end{figure*}

\subsection{Impact of the number of initial search results}
When searching, users are often interested in knowing how to set $S$, the size of the initial search result. We evaluated this behavior for the OpenAI dataset in \cref{fig:s_f}. Here, $\lambda = 0.3$, and $\varepsilon$ is determined by solving \cref{eq:opt} for each point.

Taking more candidates in the initial search (larger $S$) results in the following:
\begin{itemize}
    \item Overall performance improves (lower $f$), as having more candidates is likely to lead to better solutions.
    \item On the other hand, the runtime gradually increases. Thus, there is a clear trade-off in $S$'s choice.
\end{itemize}

\subsection{Effectiveness of training}
We investigated how hyperparameter tuning in the training phase affects final performance using the OpenAI dataset. While simple, we found that the proposed training procedure achieves sufficiently good performance.

The training of $\varepsilon$ as described in \cref{sec:opt} is shown in \cref{fig:epsilon_f} ($\lambda=0.3, K=100, S=500$). Here, the blue dots represent the actual calculation of $f$ using various $\varepsilon$ values with the test queries. The goal is to obtain $\varepsilon$ that achieves the minimum value of this curve in advance using training data. The red line represents the $\varepsilon^*$ obtained from the training queries via \cref{eq:opt}. Although not perfect, we can obtain a reasonable solution. These results demonstrate that the proposed data structure can perform well by learning the parameters in advance using training data.

\subsection{Preprocessing time}
\cref{tbl:opt} shows the training and construction details (building the cutoff table) with $K=100$ and $S=500$ for the OpenAI dataset. Here, we vary the number of database vectors $N$. For each condition, $\varepsilon$ is obtained by solving \cref{eq:opt}. The insights obtained are as follows:
\begin{itemize}
\item As $N$ increases, the time for training and construction increases, and $L$ also becomes larger, whereas $\varepsilon^*$ decreases.
\item As $\lambda$ increases, $\varepsilon^*$ and $L$ increase, and training and construction times slightly increase.
\item $L$ is at most 30 within the scope of this experiment.
\item Training and construction each take a maximum of approximately  1,100 seconds and 1 minute, respectively. This runtime is sufficiently fast but could potentially be further accelerated using specialized hardware like GPUs. 
\end{itemize}

\subsection{Qualitative evaluation for texts}

\begin{table}[tb]

    \centering
    \scalebox{0.8}{
    \begin{tabular}{@{}lp{9.5cm}@{}} \toprule
     \multicolumn{2}{@{}l@{}}{\textbf{Query}: ``Tonsillitis is a throat infection that occurs on the tonsil.''} \\ \midrule
     \multicolumn{2}{@{}l@{}}{\textbf{Results by nearest neighbor search}} \\
     1: & ``Tonsillitis refers to the inflammation of the pharyngeal tonsils and is the primary cause of sore throats.'' \\
     2: & \textcolor{red}{``Strep throat is a bacterial infection in the throat and the tonsils.''} \\
     3: & \textcolor{red}{``Strep throat is a bacterial infection of the throat and tonsils.''} \\
     4: & \textcolor{red}{``Strep throat is a bacterial infection of the throat and tonsils.''} \\
     5: & ``Mastoiditis is an infection of the spaces within the mastoid bone.'' \\ \midrule
     \multicolumn{2}{@{}l@{}}{\textbf{Results by diverse nearest neighbor search (proposed)}} \\
     1: & ``Tonsillitis refers to the inflammation of the pharyngeal tonsils and is the primary cause of sore throats.`` \\
     2: & ``Strep throat is a bacterial infection in the throat and the tonsils.`` \\
     3: & ``Mastoiditis is an infection of the spaces within the mastoid bone.`` \\
     4: & ``Tonsillitis (enlarged red tonsils) is caused by a bacterial (usually strep) or viral infection.`` \\
     5: & ``Spongiotic dermatitis is a usually uncomfortable dermatological condition which most often affects the skin of the chest, abdomen, and buttocks.`` \\ \bottomrule
    \end{tabular}
    }
    \caption{Qualitative evaluation on text data using MS MARCO.}
    \label{tbl:msmarco}
\end{table}

\begin{figure*}[tb]
\centering
\includegraphics[width=1.0\linewidth]{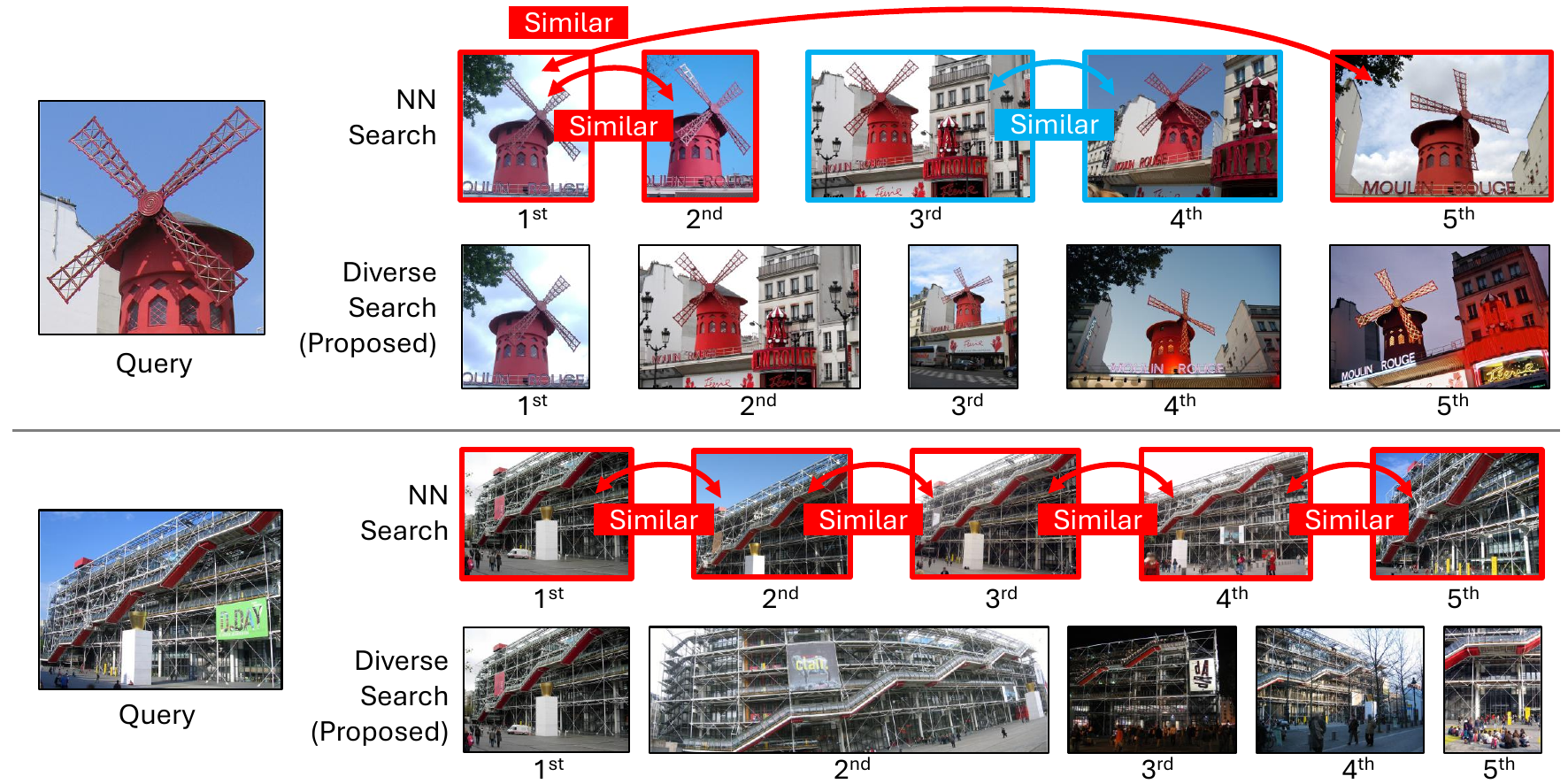}
\caption{Qualitative evaluation on image data using Revisited Paris.}
\label{fig:paris}
\end{figure*}

This section reports qualitative results using the MS MARCO dataset (\cref{tbl:msmarco}). This dataset contains many short, redundant passages, as anticipated for real-world use cases of RAG. We qualitatively compare the results of the NNS and the proposed DNNS on such a redundant dataset. The parameters are $K=10$, $S=50$, $\lambda=0.3$, and $\varepsilon^*=18.5$.

Simple NNS results displayed nearly identical second, third, and fourth-ranked results (highlighted in red), while the proposed LotusFilter eliminates this redundancy. This tendency to retrieve similar data from the scattered dataset is common if we run NNS. Eliminating such redundant results is essential for real-world RAG systems. See the supplemental material for more examples.

The proposed LotusFilter is effective because it obtains diverse results at the data structure level. While engineering solutions can achieve diverse searches, such solutions are complex and often lack runtime guarantees. In contrast, LotusFilter is a simple post-processing module with computational guarantees. This simplicity makes it an advantageous building block for complex systems, especially in applications like RAG.

\subsection{Qualitative evaluation for images}

This section reports qualitative evaluations of images. Here, we consider an image retrieval task using image features extracted from the Revisited Paris dataset (\cref{fig:paris}). The parameters are set to $K=10$, $S=100$, $\lambda=0.5$, and $\varepsilon^*=1.14$.

In the first example, a windmill image is used as a query to find similar images in the dataset. The NNS results are shown in the upper row, while the proposed diverse search results are in the lower row. The NNS retrieves images close to the query, but the first, second, and fifth images show windmills from similar angles, with the third and fourth images differing only in sky color. In a recommendation system, such nearly identical results would be undesirable. The proposed diverse search, however, provides more varied results related to the query.

In the second example, the query image is a photograph of the Pompidou Center taken from a specific direction. In this case, all the images retrieved by the NNS have almost identical compositions. However, the proposed approach can retrieve images captured from various angles.

It is important to note that the proposed LotusFilter is simply a post-processing module, which can be easily removed. For example, if the diverse search results are less appealing, simply deactivating LotusFilter would yield the standard search results. Achieving diverse search through engineering alone would make it more difficult to switch between results in this way.

\subsection{Limitations and future works}

The limitations and future works are as follows:
\begin{itemize}
\item LotusFilter involves preprocessing steps. Specifically, we optimize $\varepsilon$ for parameter tuning, and a cutoff table needs to be constructed in advance.
\item During $\varepsilon$ learning, $K$ needs to be determined in advance. In practical applications, there are many cases where $K$ needs to be varied. If $K$ is changed during the search, it is uncertain whether $\varepsilon^*$ is optimal.
\item A theoretical bound has been established for the diversification term in the cost function; however, there is no theoretical guarantee for the total cost.
\item Unlike ANNS alone, LotusFilter requires additional memory for a cutoff table. Although the memory usage is predictable at $64LN$ [bits], it can be considerable, especially for large values of $N$.
\item When $D$ is small, more straightforward methods (such as GMM) may be the better option.
\item The proposed method determines a global threshold $\varepsilon$. Such a single threshold may not work well for challenging datasets.
\item The end-to-end evaluation of the RAG system is planned for future work. Currently, the accuracy is only assessed by \cref{eq:f}, and the overall performance within the RAG system remains unmeasured. A key future direction is employing LLM-as-a-judge to evaluate search result diversity comprehensively.
\end{itemize}

\section{Conclusions}
\label{sec:conclusion}

We introduced the LotusFilter, a fast post-processing module for DNNS. The method entails creating and using a cutoff table for pruning. Our experiments showed that this approach achieves diverse searches in a similar time frame to the most recent ANNS.

\section*{Acknowledgement}
We thank Daichi Amagata and Hiroyuki Deguchi for reviewing this paper, and we appreciate Naoki Yoshinaga for providing the inspiration for this work.

{
    \small
    \bibliographystyle{ieeenat_fullname}
    \bibliography{main}
}

\clearpage
\setcounter{page}{1}
\maketitlesupplementary

\newcommand\beginsupplement{%
        \setcounter{table}{0}
        \renewcommand{\thetable}{\Alph{table}}%
        \setcounter{figure}{0}
        \renewcommand{\thefigure}{\Alph{figure}}%
        \setcounter{equation}{0}
        \renewcommand{\theequation}{\Alph{equation}}%
        \setcounter{section}{0}
        \renewcommand{\thesection}{\Alph{section}}%
        \setcounter{algocf}{0} 
        \renewcommand{\thealgocf}{\Alph{algocf}} 
     }
\beginsupplement

\section{Selection of data structures}

\begin{table*}[tb]
    \centering
    \begin{tabular}{@{}lllll@{}} \toprule
        Method & \textsc{Pop}() &  \textsc{Remove}($a$) & Constant factor & Overall complexity of Algorithm 2\\ \midrule
        Array & $\mathcal{O}(\Delta)$  & $\mathcal{O}(V)$ & & $\mathcal{O}(T+KLS)$\\
        Set (hash table) & - &  $\mathcal{O}(1)$ & & N/A\\
        List & $\mathcal{O}(1)$ & $\mathcal{O}(V)$ & & $\mathcal{O}(T+KLS)$\\ 
        Priority queue & $\mathcal{O}(\log V)$  & $\mathcal{O}(V)$ & & $\mathcal{O}(T+KLS)$\\
        List + dictionary (hash table) & $\mathcal{O}(1)$ & $\mathcal{O}(1)$ & Large & $\mathcal{O}(T+S+KL)$\\
        OrderedSet: array + set (hash table) & $\mathcal{O}(\Delta)$ & $\mathcal{O}(1)$ & & $\mathcal{O}(T+S+KL)$\\ \bottomrule
    \end{tabular}
    \caption{The average computational complexity to achieve operations on $\mathcal{S}$}
    \label{tbl:s}
\end{table*}

We introduce alternative data structures for $\mathcal{S}$ and demonstrate that the proposed OrderedSet is superior. As introduced in Sec. 5.2, an input array $\mathbf{v} = [v[1], v[2], \dots, v[V]]$ containing $V$ elements is given. The goal is to realize a data structure that efficiently performs the following operations:

\begin{itemize}
\item \textsc{Pop}: Retrieve and remove the foremost element while preserving the order of the input array.
\item \textsc{Remove}: Given an element as input, delete it from the data structure.
\end{itemize}

The average computational complexity of these operations for various data structures, including arrays, sets, priority queues, lists, and their combinations, are summarized in \cref{tbl:s}.

\paragraph{Array} When using an array directly, the \textsc{Pop} operation follows the same procedure as OrderedSet. However, element removal incurs a cost of $\mathcal{O}(V)$. This removal is implemented by performing a linear search and marking the element with a tombstone. Due to the inefficiency of this removal process, arrays are not a viable option.

\paragraph{Set}
If we convert the input array into a set (e.g., \texttt{std::unordered\_set} in C++ or \texttt{set} in Python), element removal can be achieved in $\mathcal{O}(1)$. However, since the set does not maintain element order, we cannot perform the \textsc{Pop} operation, making this approach unsuitable.

\paragraph{List}
Consider converting the input array into a list (e.g., a doubly linked list such as \texttt{std::list} in C++). The first position in the list is always accessible, and removal from this position is straightforward, so \textsc{Pop} can be executed in $\mathcal{O}(1)$. However, for \textsc{Remove}, a linear search is required to locate the element, resulting in a cost of $\mathcal{O}(V)$. Hence, this approach is slow.

\paragraph{Priority queue}
A priority queue is a commonly used data structure for implementing \textsc{Pop}. C++ STL has a standard implementation such as \texttt{std::priority\_queue}. If the input array is converted into a priority queue, the \textsc{Pop} operation can be performed in $\mathcal{O}(\log V)$. However, priority queues are not well-suited for removing a specified element, as this operation requires a costly full traversal in a naive implementation. Thus, priority queues are inefficient for this purpose.

\paragraph{List + dictionary}
Combining a list with a dictionary (hash table) achieves both \textsc{Pop} and \textsc{Remove} operations in $\mathcal{O}(1)$, making it the fastest from a computational complexity perspective. The two data structures, a list and a dictionary, are created in the construction step. First, the input array is converted into a list to maintain order. Next, the dictionary is created with a key corresponding to an element in the array, and a value is a pointer pointing to a corresponding node in the list.

During removal, the element is removed from the dictionary, and the corresponding node in the list is also removed. This node removal is possible since we know its address from the dictionary. This operation ensures the list maintains the order of remaining elements. For \textsc{Pop}, the first element in the list is extracted and removed, and the corresponding element in the dictionary is also removed.

While the list + dictionary combination achieves the best complexity, its constant factors are significant. Constructing two data structures during initialization is costly. \textsc{Remove} must also remove elements from both data structures. Furthermore, in our target problem (Algorithm 2), the cost of set deletions within the for-loop (\texttt{L6}) is already $\mathcal{O}(L)$. Thus, even though \textsc{Pop} is $\mathcal{O}(1)$, it does not improve the overall computational complexity. Considering these factors, we opted for OrderedSet.

\paragraph{OrderedSet}
As summarized in \cref{tbl:s}, our OrderedSet introduced in Sec. 5.2 combines the advantages of arrays and hash tables. During initialization, only a set is constructed. The only operation required for removals is deletion from the set, resulting in smaller constant factors than other methods.

\section{Details of training}

\begin{algorithm}[tb]
\DontPrintSemicolon
\KwIn{$\mathcal{Q}_\mathrm{train} \subset \mathbb{R}^D$}
\KwHyperParam{$\varepsilon_\mathrm{max} \in \mathbb{R}$, ~ $W \in \mathbb{R}$, ~ $\mathcal{I}$, ~  $\lambda \in [0, 1]$, ~ $S$, ~ $K$}
\KwOut{$\varepsilon^* \in \mathbb{R}$}
$\varepsilon_\mathrm{left} \gets 0$ \PythonStyleComment*[r]{Lower bound}
$\varepsilon_\mathrm{right} \gets \varepsilon_\mathrm{max}$ \PythonStyleComment*[r]{Upper bound}
$r \gets \varepsilon_\mathrm{right} - \varepsilon_\mathrm{left}$ \PythonStyleComment*[r]{Search range}
$\varepsilon^* \gets \infty$ \;
$f^* \gets \infty$ \;
\SimpleRepeat{$\mathrm{5~times}$}{
    \PythonStyleComment*[l]{Sampling $W$ candidates at equal intervals from the search range}
    $\mathcal{E} \gets \left \{\varepsilon_\mathrm{left} + i \frac{\varepsilon_\mathrm{left} - \varepsilon_\mathrm{right}}{W} \mid i \in \{0, \dots, W\} \right \}$ \;
    \PythonStyleComment*[l]{Evaluate all candidates and find the best one}
    \For{$\varepsilon \in \mathcal{E}$} {
        $f \gets \underset{\mathbf{q} \in \mathcal{Q}_\mathrm{train}}{\mathbb{E}} \left [ f^*(\varepsilon, \mathbf{q}) \right ]$ \;
        \If{$f < f^*$}{
            $\varepsilon^* \gets  \varepsilon$ \;
            $f^* \gets f$ \;
        }
    }
    $r \gets r / 2$ \PythonStyleComment*[r]{Shrink the range}
    \PythonStyleComment*[l]{Update the bounds}
    $\varepsilon_\mathrm{left} \gets \max (\varepsilon^* - r, ~ 0)$ \;
    $\varepsilon_\mathrm{right} \gets \min (\varepsilon^* + r, ~ \varepsilon_\mathrm{max})$ \;
}
\Return{$\varepsilon^*$}

\caption{Training for $\varepsilon$}
\label{alg:train}
\end{algorithm}

In \cref{alg:train}, we describe our training approach for $\varepsilon$ (Eq. (7)) in detail. The input to the algorithm is the training query vectors $\mathcal{Q}_\mathrm{train} \subset \mathbb{R}^D$, which can be prepared by using a part of the database vectors. The output is $\varepsilon^*$ that minimizes the evaluation function $f^*(\varepsilon, \mathbf{q})$ defined in Eq. (6). Since this problem is a non-linear single-variable optimization, we can apply black-box optimization~\cite{optuna_2019}, but we use a more straightforward approach, bracketing~\cite{book_kochenderfer2019}.

\cref{alg:train} requires several hyperparameters:
\begin{itemize}
\item $\varepsilon_\mathrm{max} \in \mathbb{R}$: The maximum range of $\varepsilon$. This value can be estimated by calculating the inter-data distances for sampled vectors.
\item $W \in \mathbb{R}$: The number of search range divisions. We will discuss this in detail later.
\item LotusFilter parameters: These include $\mathcal{I}$, $\lambda \in [0, 1]$, $S$, and $K$. Notably, this training algorithm fixes $\lambda$, $S$, and $K$, and optimizes $\varepsilon$ under these conditions.
\end{itemize}

First, the search range for the variable $\varepsilon$ is initialized in \texttt{L1} and \texttt{L2}. Specifically, we consider the range $[\varepsilon_\mathrm{left}, \varepsilon_\mathrm{right}]$ and examine the variable within this range $\varepsilon \in [\varepsilon_\mathrm{left}, \varepsilon_\mathrm{right}]$. The size of this range is recorded in \texttt{L3}. The optimization loop is executed in \texttt{L6}, where we decided to perform five iterations. In \texttt{L7}, $W$ candidates are sampled at equal intervals from the current search range of the variable. We evaluate all candidates in \texttt{L8-12}, selecting the best one. Subsequently, the search range is narrowed in \texttt{L13-15}. Here, the size of the search range is gradually reduced in \texttt{L13}. The search range for the next iteration is determined by centering around the current optimal value $\varepsilon^*$ and extending $r$ in both directions (\texttt{L14-15}).

The parameter $W$ is not a simple constant but is dynamically scheduled. $W$ is set to 10 for the first four iterations to enable coarse exploration over a wide range. In the final iteration, $W$ is increased to 100 to allow fine-tuned adjustments after the search range has been adequately narrowed.

The proposed training method adopts a strategy similar to beam search, permitting some breadth in the candidate pool while greedily narrowing the range recursively. This approach avoids the complex advanced machine learning algorithms, making it simple and fast (as shown in Table 2, the maximum training time observed in our experiments was less than approximately 1100 seconds on CPUs). As illustrated in Fig. 4, this training approach successfully identifies an almost optimal parameter.

\section{Experiments on memory-efficient datasets}

\begin{table*}[tb]
    \centering
    \begin{tabular}{@{}lllllllll@{}} \toprule
            & \multicolumn{3}{c}{Cost function ($\downarrow$)} & \multicolumn{3}{c}{Runtime [ms/query] ($\downarrow$)} & \multicolumn{2}{c}{Memory overhead [bit] ($\downarrow$)} \\ \cmidrule(l){2-4} \cmidrule(l){5-7} \cmidrule(l){8-9}
        Filtering & Search & Diversification & Final ($f$)& Search & Filter & Total & $\{ \mathbf{x}_n \}_{n=1}^N$ & $\{ \mathcal{L}_n \}_{k=1}^K$ \\ \midrule
        None (Search only)     & $10197$ & $-778$  & $6904$ & $0.241$ & -       & $\mathbf{0.241}$ & -                    & - \\ 
        Clustering             & $11384$ & $-2049$ & $7354$ & $0.309$ & $0.372$  & $0.681$  & $8 \times {10}^{9}$ & - \\      
        GMM~\cite{or_ravi1994} & $12054$ & $-9525$ & $\mathbf{5580}$ & $0.310$ & $0.367$  & $0.677$  & $8 \times 10^{9}$ & - \\      
        LotusFilter (Proposed) & $10648$ & $-5592$ & \underline{$5776$} & $0.310$ & $0.016$ & \underline{$0.326$}  & -                    & $3.7 \times {10}^{10}$ \\ \bottomrule
    \end{tabular}
    \caption{Comparison with existing methods for the MS SpaceV 1M dataset. The parameters are $\lambda=0.3,~K=100,~S=300,~\varepsilon^*=5869,$ and $L=58.3$. The search step is with HNSW~\cite{tpami_malkov2020}. Bold and underlined scores represent the best and second-best results, respectively.}
    \label{tbl:comp_msspacev}
\end{table*}

We present the experimental results on a memory-efficient dataset and demonstrate that simple baselines can be viable choices. Here, we use the Microsoft SpaceV 1M dataset~\cite{pmlr_simhadrik2022}. This dataset consists of web documents represented by features extracted using the Microsoft SpaceV Superion model~\cite{bigdata_shan2021}. While the original dataset contains $N=10^9$ vectors, we used the first $10^7$ vectors for our experiments. We utilized the first $10^3$ entries from the query set for query data. The dimensionality of the vectors is $100$, which is relatively low-dimensional, and each element is represented as an 8-bit integer. Therefore, compared to features like those from CLIP, which are represented in float and often exceed $1000$ dimensions, this dataset is significantly more memory-efficient.

\cref{tbl:comp_msspacev} shows the results. While the overall trends are similar to those observed with the OpenAI dataset in Table~1, there are key differences:
\begin{itemize}
\item LotusFilter remains faster than Clustering and GMM, but the runtime advantage is minor. This result is because LotusFilter's performance does not depend on $D$, whereas Clustering and GMM are $D$-dependent, and thus their performance improves relatively as $D$ decreases.
\item Memory usage is higher for LotusFilter. This is due to the dataset being represented as memory-efficient 8-bit integers, causing the cutoff table of LotusFilter to consume more memory in comparison.
\end{itemize}
From the above, simple methods, particularly GMM, are also suitable for memory-efficient datasets.

\section{Additional results of qualitative evaluation on texts}

\begin{table*}[tb]

    \centering
    \scalebox{0.8}{
    \begin{tabular}{@{}lp{20cm}@{}} \toprule
     \multicolumn{2}{@{}l@{}}{\textbf{Query}: ``This condition is usually caused by bacteria entering the bloodstream and infecting the heart.''} \\ \midrule
     \multicolumn{2}{@{}l@{}}{\textbf{Results by nearest neighbor search}} \\
     1: & \textcolor{red}{``It is a common symptom of coronary heart disease, which occurs when vessels that carry blood to the heart become narrowed and blocked due to atherosclerosis.''} \\
     2: & \textcolor{red}{``It is a common symptom of coronary heart disease, which occurs when vessels that carry blood to the heart become narrowed and blocked due to atherosclerosis.''} \\
     3: & \textcolor{red}{``It is a common symptom of coronary heart disease, which occurs when vessels that carry blood to the heart become narrowed and blocked due to atherosclerosis.''} \\
     4: & ``Cardiovascular disease is the result of the build-up of plaques in the blood vessels and heart.'' \\
     5: & ``The most common cause of myocarditis is infection of the heart muscle by a virus.'' \\ \midrule
     \multicolumn{2}{@{}l@{}}{\textbf{Results by diverse nearest neighbor search (proposed)}} \\
     1: & ``It is a common symptom of coronary heart disease, which occurs when vessels that carry blood to the heart become narrowed and blocked due to atherosclerosis.'' \\
     2: & ``Cardiovascular disease is the result of the build-up of plaques in the blood vessels and heart.'' \\
     3: & ``The most common cause of myocarditis is infection of the heart muscle by a virus.'' \\
     4: & ``The disease results from an attack by the body’s own immune system, causing inflammation in the walls of arteries.'' \\
     5: & ``The disease disrupts the flow of blood around the body, posing serious cardiovascular complications.'' \\ \midrule
          \vspace{5mm}
    \end{tabular}
    }

    \scalebox{0.8}{
    \begin{tabular}{@{}lp{20cm}@{}} \toprule
     \multicolumn{2}{@{}l@{}}{\textbf{Query}: ``Psyllium fiber comes from the outer coating, or husk of the psyllium plant's seeds.''} \\ \midrule
     \multicolumn{2}{@{}l@{}}{\textbf{Results by nearest neighbor search}} \\
     1: & ``\textcolor{red}{Psyllium is a form of fiber made from the Plantago ovata plant, specifically from the husks of the plant’s seed.''} \\
     2: & ``\textcolor{red}{Psyllium is a form of fiber made from the Plantago ovata plant, specifically from the husks of the plant’s seed.''} \\
     3: & ``Psyllium husk is a common, high-fiber laxative made from the seeds of a shrub.'' \\
     4: & ``\textcolor{blue}{Psyllium seed husks, also known as ispaghula, isabgol, or psyllium, are portions of the seeds of the plant Plantago ovata, (genus Plantago), a native of India and Pakistan.''} \\
     5: & ``\textcolor{blue}{Psyllium seed husks, also known as ispaghula, isabgol, or psyllium, are portions of the seeds of the plant Plantago ovata, (genus Plantago), a native of India and Pakistan.''} \\ \midrule
     \multicolumn{2}{@{}l@{}}{\textbf{Results by diverse nearest neighbor search (proposed)}} \\
     1: & ``Psyllium is a form of fiber made from the Plantago ovata plant, specifically from the husks of the plant’s seed.'' \\
     2: & ``Psyllium husk is a common, high-fiber laxative made from the seeds of a shrub.'' \\
     3: & ``Flaxseed oil comes from the seeds of the flax plant (Linum usitatissimum, L.).'' \\
     4: & ``The active ingredients are the seed husks of the psyllium plant.'' \\
     5: & ``Sisal fibre is derived from the leaves of the plant.'' \\ \midrule
           \vspace{5mm}
    \end{tabular}
    }

    \scalebox{0.8}{
    \begin{tabular}{@{}lp{20cm}@{}} \toprule
     \multicolumn{2}{@{}l@{}}{\textbf{Query}: ``In the United States there are grizzly bears in reserves in Montana, Idaho, Wyoming and Washington.''} \\ \midrule
     \multicolumn{2}{@{}l@{}}{\textbf{Results by nearest neighbor search}} \\
     1: & ``\textcolor{red}{In the United States there are grizzly bears in reserves in Montana, Idaho, Wyoming and Washington.''} \\
     2: & ``\textcolor{red}{In North America, grizzly bears are found in western Canada, Alaska, Wyoming, Montana, Idaho and a potentially a small population in Washington.''} \\
     3: & ``In the United States black bears are common in the east, along the west coast, in the Rocky Mountains and parts of Alaska.'' \\
     4: & \textcolor{blue}{``Major populations of Canadian lynx, Lynx canadensis, are found throughout Canada, in western Montana, and in nearby parts of Idaho and Washington.''} \\
     5: & \textcolor{blue}{``Major populations of Canadian lynx, Lynx canadensis, are found throughout Canada, in western Montana, and in nearby parts of Idaho and Washington.''} \\ \midrule
     \multicolumn{2}{@{}l@{}}{\textbf{Results by diverse nearest neighbor search (proposed)}} \\
     1: & ``In the United States there are grizzly bears in reserves in Montana, Idaho, Wyoming and Washington.'' \\
     2: & ``In the United States black bears are common in the east, along the west coast, in the Rocky Mountains and parts of Alaska.'' \\
     3: & ``Major populations of Canadian lynx, Lynx canadensis, are found throughout Canada, in western Montana, and in nearby parts of Idaho and Washington.'' \\
     4: & ``Today, gray wolves have populations in Alaska, northern Michigan, northern Wisconsin, western Montana, northern Idaho, northeast Oregon and the Yellowstone area of Wyoming.'' \\
     5: & ``There are an estimated 7,000 to 11,200 gray wolves in Alaska, 3,700 in the Great Lakes region and 1,675 in the Northern Rockies.'' \\ \midrule    \end{tabular}
    }
    \caption{Additional qualitative evaluation on text data using MS MARCO.}
    \label{tbl:msmarco2}
\end{table*}

In \cref{tbl:msmarco2}, we present additional results of a diverse search on text data as conducted in Sec 7.5. Here, we introduce the results for three queries as follows.

For the first query, ``This condition...'', three identical results appear at the first three results. When considering RAG, it is typical for the information source to contain redundant data like this. Removing such redundant data beforehand can sometimes be challenging. For instance, if the data sources are continuously updated, it may be impossible to check for redundancy every time new data is added. The proposed LotusFilter helps eliminate such duplicate data as a simple post-processing. LotusFilter does not require modifying the data source or the nearest neighbor search algorithm.

For the second query, ``Psyllium...'', the first and second results, as well as the third and fourth results, are almost identical. This result illustrates that multiple types of redundant results can often emerge during the search. Without using the proposed LotusFilter, removing such redundant results during post-processing is not straightforward.

For the third query, ``In the United...'', while there is no perfect match, similar but not identical sentences are filtered out. We can achieve it because LotusFilter identifies redundancies based on similarity in the feature space. As shown, LotusFilter can effectively eliminate similar results that cannot necessarily be detected through exact string matching.

\end{document}